\documentclass{article}
\pdfoutput=1

\usepackage{cite}
\usepackage[pdftex]{graphicx}
\usepackage[cmex10]{amsmath}
\usepackage{amssymb}
\usepackage{amsthm}
\usepackage{algorithmic}
\usepackage[caption=false,font=footnotesize]{subfig}
\usepackage{color}
\usepackage{xfrac}
\usepackage{dsfont}
\usepackage[table,xcdraw]{xcolor}
\usepackage{appendix}

\newtheorem{definition}{Definition}
\newtheorem{thm}{Theorem}
\newtheorem{prop}{Proposition}
\newtheorem{lem}{Lemma}

\newcommand{\m}[1]{\mathcal{#1}}

\newcommand{\e}{\epsilon}

\newcommand{\teq}{\triangleq}

\newcommand{\set}[1]{\left\{#1\right\}}
\newcommand{\sbrk}[1]{\left[#1\right]}
\newcommand{\paren}[1]{\left(#1\right)}

\newcommand{\floor}[1]{\left\lfloor#1\right\rfloor}

\newcommand{\given}[2]{\left.#1\right|#2}

\newcommand{\mP}[1]{\mathbb{P}\left[#1\right]}
\newcommand{\cP}[2]{\mathbb{P}\left[\given{#1}{#2}\right]}

\newcommand{\figref}[1]{Figure \ref{#1}}
\newcommand{\secref}[1]{Section \ref{#1}}
\newcommand{\algref}[1]{Algorithm \ref{#1}}
\newcommand{\lemref}[1]{Lemma \ref{#1}}
\newcommand{\thmref}[1]{Theorem \ref{#1}}

\newcommand{\defref}[1]{Definition \ref{#1}}
\newcommand{\tabref}[1]{Table \ref{#1}}

\newcommand{\algorithmicnote}{\textbf{note:}}
\newcommand{\NOTE}{\item[\algorithmicnote]}

\hyphenation{op-tical net-works semi-conduc-tor ban-dits ban-dit}

\begin{document}
%
\title{Multi-user lax communications:\\ a multi-armed bandit approach}

\author{Orly Avner\and Shie Mannor}

\maketitle

\begin{abstract}
Inspired by cognitive radio networks, we consider a setting where multiple users share several channels modeled as a multi-user multi-armed bandit (MAB) problem. The characteristics of each channel are unknown and are different for each user.
Each user can choose between the channels, but her success depends on the particular channel chosen as well as on the selections of other users: if two users select the same channel their messages collide and none of them manages to send any data. Our setting is fully distributed, so there is no central control. As in many communication systems, the users cannot set up a direct communication protocol, so information exchange must be limited to a minimum.
We develop an algorithm for learning a stable configuration for the multi-user MAB problem. We further offer both convergence guarantees and experiments inspired by real communication networks, including comparison to state-of-the-art algorithms.
\end{abstract}

\section{Introduction}
The inspiration for this paper comes from the world of distributed multi-user communication networks, such as cognitive radio networks. These networks consist of a set of communication channels with different characteristics, and independent users whose goal is to transmit over these channels as efficiently as possible.

Modern networks, such as cognitive radio networks, must cope with several challenges. First and foremost, the networks' distributed nature prohibits any form of central control. In addition, many users operate on an ``ad hoc'' basis, preventing them from forming inter-user communication. In fact, they probably do not even know how many users share their network.

On top of these issues of multi-user coordination, the channel characteristics may be initially unknown, and differ between users. Thus, learning must be integrated into the solution.

\subsection{Cognitive radio networks}
Cognitive Radio Networks (CRNs), introduced in \cite{Mitola1999}, have attracted considerable attention in recent years. The idea that lies at the heart of CRNs is that advanced sensing mechanisms and increased computation power may enable radio devices to dramatically improve their performance in terms of resource utilization, resilience and more. Networks of such users are usually dynamic and stochastic, giving rise to many interesting problems \cite{Akyildiz2008,Haykin2005}. We focus on developing a sensing and transmission scheme that enables users to learn a stable, orthogonal configuration without communicating directly.

\subsection{Multi-armed bandits}
A well known framework for learning in CRNs is the classical Multi-Armed Bandit (MAB) model. MABs offer a simple, intuitive framework for learning the characteristics of a number of unknown options in an online manner, while balancing exploration and exploitation. A MAB problem consists of a single user repeatedly choosing between arms with different characteristics, that are initially unknown. After every round, the user acquires a reward that depends on the arm she chose. Her goal in most setups is to maximize the expected sum of rewards acquired over time.

As suggested in \cite{Jouini2010}, the channels of a CRN are naturally cast as the arms of a bandit, with different performance measures (bandwidth, ACK signals, bit rate) serving as the reward.

Many papers propose solutions for the stochastic MAB problem (see, e.g., \cite{Auer2002a,Auer2010,Garivier2011}) and its adversarial version (see, e.g., \cite{Auer2002b}), but they all assume a single user is sampling the arms of the bandit.

However, this assumption does not apply in multi-user networks. In the multi-user MAB model, users compete over the arms of \emph{the same} bandit. As a result, they are bound to experience collisions (i.e., multiple users sampling the same arm), unless they employ some form of collision avoidance or coordination mechanism. Collisions in communication networks result in performance degradation, corresponding to reward loss in the MAB model. In order to avoid reward loss, the presence of multiple users must be addressed. We survey several approaches to this issue in \secref{sec:prevWork}.

\subsection{Extension of the CRN-MAB setting}
The novelty introduced in our paper lies in the combination of bandit learning, multiple users, different reward distributions for different users and no direct communication.
The combination of these last two demands - different distributions and no direct communication, poses a real challenge.

As explained in detail in \secref{sec:opt} and in \secref{sec:SM_form},
the only thing we can guarantee in terms of network behavior in this setup is stability. In a dynamic, distributed network, stability is of great value. Once a network has reached a stable configuration, users can focus on utilizing its resources, rather than engaging in coordination or learning efforts; a stable network is more robust and efficient.

Reaching stability is a nontrivial task, since users must learn their channel characteristics while coordinating their actions with the other users, based on very limited observations.

\subsection{Previous work}\label{sec:prevWork}
We now present several approaches to the CRN-MAB problem, coming from different areas and disciplines.

Our problem may be viewed as an assignment problem, i.e., maximum weight matching in a weighted bipartite graph. Users correspond to agents, channels to tasks, and rewards are simply the complementary of the costs of graph edges. Several papers have been published on the distributed assignment problem, but to the best of our knowledge none of them offers a solution for our problem. The well-known Hungarian method \cite{Kuhn1955} requires full knowledge of the graph (i.e., channel characteristics) and assumes the existence of central control. The Bertsekas auction algorithm \cite{Bertsekas1988} frees us from the need for central control, at the cost of direct communication between nodes. The classical Gale-Shapley algorithm \cite{Gale1962} solves the problem of finding a stable marriage configuration, but does not take the need to learn into account. Some papers have actually applied it to CRNs, but not in the learning context \cite{Cohen2013,Leshem2012}. Another work on distributed stable marriage, that makes use of a variant of the Gale-Shapley algorithm, is \cite{Floreen2010}. While it is quite foreign to our problem, the potential function defined in the paper is helpful in our analysis. Another noteworthy work in this context is \cite{Amira2010}. The authors address the challenge of limiting communication between nodes to a minimum, and propose two communication models. Nevertheless, they allow more communication than we would like, and their formulation does not consider learning. Two additional results that deal with distributed stable marriage offer lower bounds and state that \emph{some} form of information exchange is inevitable when solving such problems \cite{Gonczarowski2014,Kipnis2009}.

The papers closest to ours in spirit are those dealing with multi-user MABs. There has been work on the case of reward distributions that do not vary between users, such as \cite{Anandkumar2011} and \cite{Avner2014}. The latter introduces an algorithm that is able to cope with a variable number of users.
Another paper, that addresses different reward distributions for different users, is \cite{Kalathil2014}. Here, the authors employ the Bertsekas auction algorithm. This approach enables users to reach a reward-maximizing solution, at the price of direct, frequent communication between themselves. We further elaborate on the difference between our approach and the approach of \cite{Kalathil2014} in \secref{sec:experiments}.

To this end, we would like to point out that communication between users is undesirable not only because of its price in terms of network resources and time. Once users depend on communication, they are more vulnerable to intentional attacks that may disrupt it, as well as noise bursts that are common in CRNs.

\section{Model and formulation}
We now describe the model, the assumptions accompanying it and our goal.

\subsection{System and users}
We model a communication network with $K$ channels, servicing $N$ independent users. Our work is based on the assumption that $K\geq N$, which is reasonable since without it, implementing a time division based mechanism is necessary. Once such a mechanism is applied, the assumption that $K\geq N$ is valid again.
Time is slotted and users' clocks are synchronized, also a mild assumption for modern communication systems.

The communication network consists of $K$ channels, where only one user can transmit over a certain channel during a single time slot. Each transmission yields a reward, which we assume to be stochastic.

The users are a group of $N$ independent, selfish agents. Their observations are local, consisting only of the history of their actions and rewards. In addition, they do not know the number of users they share a network with. There is no central control managing their use of the network, and they do not have direct communication with each other.

A key characteristic of our model is that the expected reward a channel yields depends not only on the identity of the channel, but also on the identity of the user. Formally, the rewards of the channels are Bernoulli random variables with expected values $\set{\mu_{n,k}}$, where $n\in\set{1,\ldots,N}$ and $k\in\set{1,\ldots,K}$. This property reflects the fact that in real-life users may experience location-based disturbances, manifested in different reward distributions for the same channel.

We model the users' sharing resources through the representation of the communication network by a \emph{single} bandit. This means that two users attempting to access the same channel at the same time, will experience a collision. In our model, the result of a collision is complete loss of communication for that time slot for the colliding users, i.e., zero reward. A user $n$ that accesses a channel $k$ alone during a certain time slot will receive a reward drawn i.i.d. from a Bernoulli distribution with expected value $\mu_{n,k}$. Throughout the paper, we use the term \emph{configuration} to refer to a mapping of users to channels.

\subsection{Limited coordination}\label{ssec:comm}
In an effort to keep our model faithful to real world CRNs, we limit the coordination between users to a minimum.
Thus, users can only transmit in a channel of their choice, or sense the spectrum range and receive binary feedback regarding all channels $\set{1,\ldots,K}$ at time $t$. A ``0'' represents no transmission in channel, while ``1'' stands for the opposite.

\subsection{Reward maximizing solution}\label{sec:opt}
We adopt a system-wide view for characterizing the optimal solution. The optimal configuration must be orthogonal (i.e., no more than one user per channel), in order to avoid collisions and the resulting reward loss. One common approach seeks to maximize the sum of rewards over all users, over time. The assignment of users to channels is chosen accordingly:
$$
  R^* = \max_{\pi\in\m{C}}\sum_{n=1}^N \mu_{n,\pi\paren{n}}
,$$
where $\m{C}$ is the set of all possible permutations of subsets of size $N$ chosen without replacement from the set $\set{1,\ldots,K}$.

However, reaching such a solution requires frequent information exchange. Assume channel $k$ is optimal for two different users $m$ and $n$, but $\mu_{m,k} > \mu_{n,k}$. To maximize the system-wide reward, user $n$ must step down and choose a different channel. The lack of central control requires explicit information exchange regarding the values of $\mu_{m,k}$ and $\mu_{n,k}$, for $m$ and $n$ to decide which of them should step down. Since the reward estimates are updated as time goes by, such preferences must be communicated repeatedly.

Due to limited information exchange, a reward-maximizing solution cannot be guaranteed in our setup. We therefore focus on convergence to a stable, orthogonal configuration.

\subsection{Stable marriage solution}\label{sec:SM_form}
Our goal is to develop policies that will lead users to a stable configuration. We employ the notion of stable marriage to formally define stability:
\begin{definition}\label{def:SMC}
  A Stable Marriage Configuration (SMC) is an assignment of users to channels such that no two users would be willing to swap channels, had they known the true values of the expected rewards.
  Formally, for a pair of users $n$,$m$:
    \begin{align*}
    S_1 &\triangleq \paren{\mu_{n,a_n} < \mu_{n,a_m}}
    && \text{user n would like to swap} \\
    S_2 &\triangleq \paren{\mu_{m,a_m} \leq \mu_{m,a_n}}
    && \text{user m is willing like to swap},
  \end{align*}

  where $a_m$ and $a_n$ are the users' current actions.
  In an SMC,
  \begin{align*}
    S_1 \land S_2 = 0 \quad \forall n,m.
  \end{align*}
\end{definition}

\subsection{Goal}\label{sec:goal}
Given a system with $K$ channels and $N$ users, allowing only limited communication as described in \secref{ssec:comm}, our goal is to reach a configuration that is
orthogonal:  no two users use the same channel, and an SMC, according to \defref{def:SMC}.

\section{Coordination protocol}\label{sec:coord}
Our coordination protocol balances the limitations of \secref{ssec:comm} with the users' need for information exchange by introducing a signalling mechanism between pairs of users. At predefined time slots, a user wishing to occupy a channel may transmit in that channel to express her wish.
In order to ensure that this signal is received by the user currently occupying the channel, we employ a frame-based protocol. We assume users can transmit and sense at the same time, a reasonable requirement in modern communication systems.

The following explanation is best understood by observing \figref{fig:frames}. Our protocol divides time into super frames of length $T_{\text{SF}} = 2 + 2\paren{K-1}$. Each super frame begins with a pair of time slots, $S_1$ and $S_2$, during which a single signalling user, the initiator, is coordinated for the entire super frame. The procedure is described in \algref{alg:CSM-MAB} and in \figref{fig:alg1}. Next come $K-1$ mini-frames of two time slots each, denoted by $S_3$ and $S_4$. Each of these mini-frames corresponds to one channel on the initiator's list of preferred channels. Thus, a single super frame enables one user to go over her entire preference list and signal other users, suggesting they swap channels with her, as explained in \figref{fig:alg2}.
\begin{figure}
\centering
\includegraphics[width=0.75\textwidth]{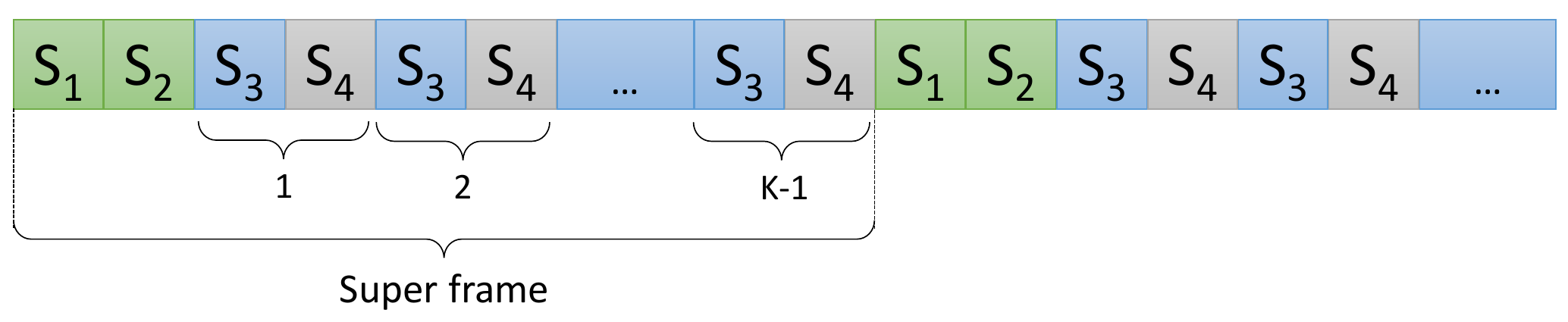}
\caption{Frame structure}
\label{fig:frames}
\end{figure}
\begin{figure}
\centering
\includegraphics[width=0.75\textwidth]{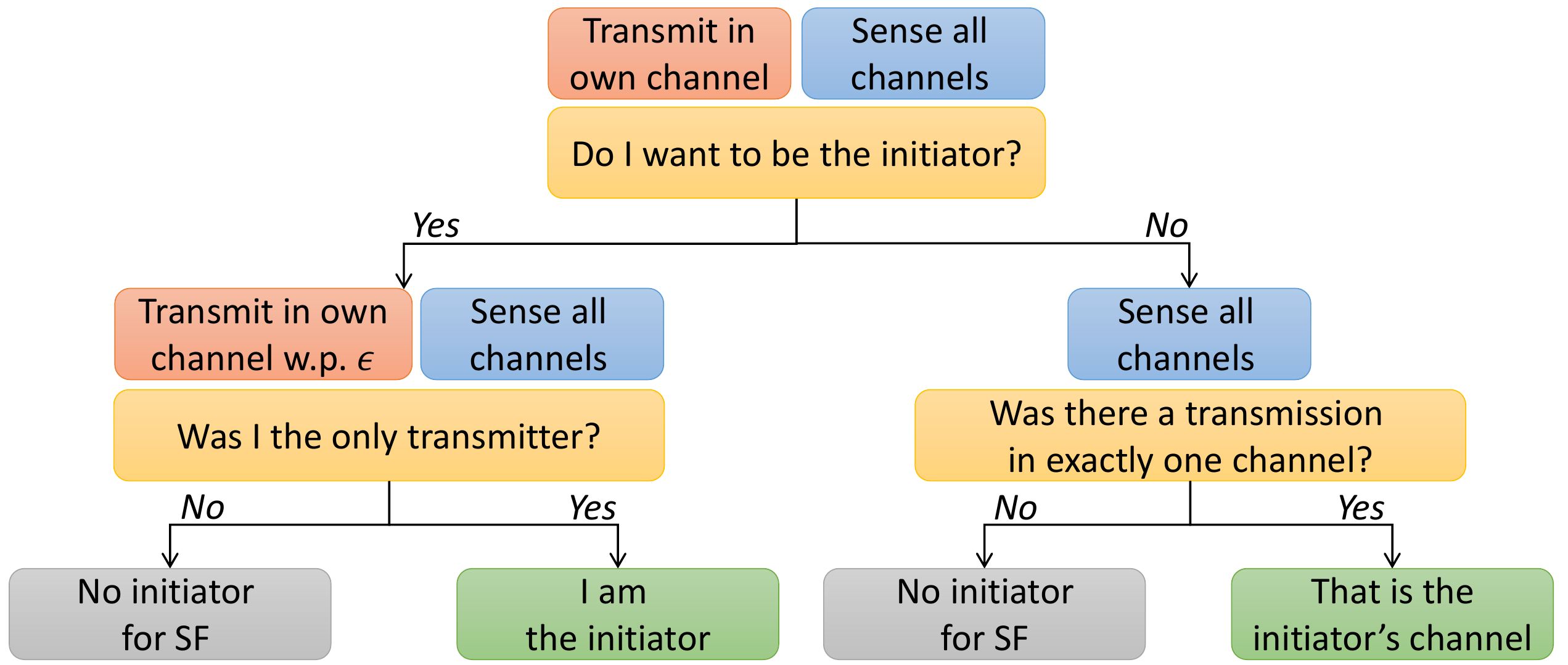}
\caption{Selection of initiator}
\label{fig:alg1}
\end{figure}
The time slots marked $S_4$ allow users not participating in the coordinating process during a certain mini-frame to sample their current channel and proceed with the learning-while-transmitting process. Thus, all but two users (initiator and responder) gather a sample during each mini-frame, resulting in at least $K-2$ samples for each of the users, except for the initiator, over each super frame.
\begin{figure}
\centering
\includegraphics[width=0.75\textwidth]{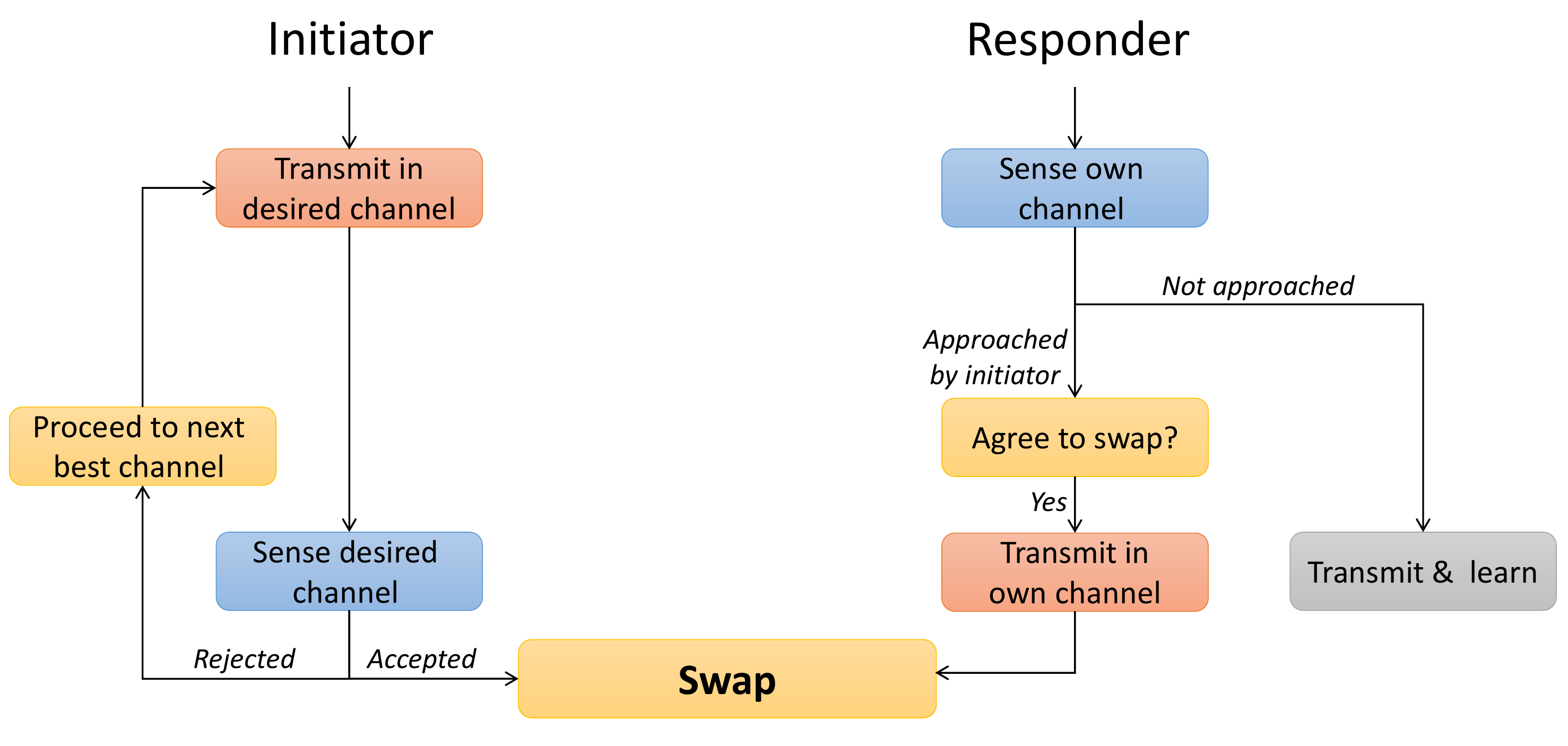}
\caption{Initiator-responder dynamics}
\label{fig:alg2}
\end{figure}

While this may seem like much coordination, the protocol is very simple to implement, and is indeed lightweight when compared to other protocols, as further explained in \secref{sec:analysis} and in \secref{sec:experiments}.

\section{The CSM-MAB algorithm}\label{sec:full_alg}
We now turn to a full description of our algorithm, the Coordinated Stable Marriage Multi-Armed Bandit (CSM-MAB) algorithm. We propose a user-level algorithm for a fully distributed system, whose goal is described in \secref{sec:goal}. When all users in the network apply CSM-MAB, the assignment of users to channels is guaranteed to be orthogonal, and converges to an SMC.

Our algorithm begins with a start up phase, during which users transmit and sense to detect collisions, in order to reach an initial orthogonal configuration (line 1). This phase follows the lines of the CFL algorithm introduced in \cite{Leith2012}, and converges quickly.
Once an initial orthogonal configuration has been reached, users start executing the CSM-MAB algorithm, described in \figref{alg:CSM-MAB}.

\begin{figure}
  \begin{algorithmic}[1]
    \STATE $a_n\paren{0} \gets$ \textbf{apply\_CFL}$\paren{K}$
    \FORALL{frames $t$}
        \IF [Beginning of SF]{$\mod\paren{t,T_{\text{SF}}} == 1$}
            \STATE $list\gets$ \textbf{rank\_channels}$\paren{a_n\paren{t-1},\hat{\mu}_n,s_n}$
            \IF [User seeks to change channel] {$list \neq \mathbf{0}$}
                \STATE $flag_n\gets$ \textbf{rand}$\paren{\text{Bernoulli},\epsilon}$
                \IF {$\paren{flag_n == 1} \land \paren{flag_i == 0 \; \forall i \ne n}$}
                    \STATE $initiator = n$ \COMMENT{User $n$ is initiator for this SF}
                    \STATE $pref = 1$ \COMMENT{Initialize swapping preference to 1}
                \ENDIF
            \ENDIF
        \ELSE
            \IF [$n$ is the initiator, $list$ not exhausted yet]{$ \paren{initiator == n} \land \paren{pref > 0}$}
                \STATE $response \gets$ \textbf{propose\_swap}$\paren{list\paren{pref}}$
                \IF [Responder agreed or channel is available]{$response == 1$}
                    \STATE $a\paren{t} \gets$ \textbf{swap}$\paren{a_n\paren{t},list\paren{pref}}$
                    \STATE $pref \gets 0$
                \ELSE
                    \STATE $pref \gets pref + 1$ \COMMENT{Move to next best channel}
                \ENDIF
            \ENDIF
        \ENDIF
        \STATE $r_n\paren{t}\gets$\textbf{execute\_action}$\paren{a_n\paren{t}}$
        \STATE \textbf{update\_stats}$\paren{r_n\paren{t},\hat{\mu}_{n,a_n\paren{t}},s_{n,a_n\paren{t}}}$
    \ENDFOR
    \NOTE $\hat{\mu}_{n,k}$ is the empirical mean of the reward for user $n$ on arm $k$; $s_{n,k}$ is the number of times she has sampled it.
  \end{algorithmic}
\caption{The CSM-MAB algorithm}
\label{alg:CSM-MAB}
\end{figure}

At the beginning of each super frame, users execute the \textbf{rank\_channels} procedure to individually create a list of channels they prefer over their current action (line 4). Channels are assigned values according to their UCB indices, calculated using the well known formula from \cite{Auer2002a}:
\begin{align}\label{eq:UCBind}
I_{n,k}\paren{t} = \hat{\mu}_{n,k} + \sqrt{\frac{2\ln t}{s_{n,k}}},
\end{align}
where $\hat{\mu}_{n,k}$ is the empirical mean of the reward acquired by user $n$ on channel $k$ up till time $t$ and $s_{n,k}$ is the number of times she sampled arm $k$ up till time $t$.

Next, the users coordinate an initiator according to the scheme in \figref{fig:alg1}. Every user who would like to improve upon her current channel presents herself as the initiator with a probability of $\e = \frac{1}{K}$ (lines 5-11). An agreed initiator for the SF emerges if and only  exactly one user raises her flag (the value of $\e$ is chosen in order to maximize the probability of this occurring). Once a single initiator is agreed upon, all users take note of her current channel, based on their sensing. They will need this knowledge to decide whether to accept her swapping suggestion.

The initiator proceeds to signal other users, based on her ranking of channels (lines 13-21). Signalling is implemented in \textbf{propose\_swap} by transmitting in the initiator's channel of interest. Each responder (i.e., signalled user) checks whether swapping channels with the initiator will improve her situation, based on her own ranking. Once a responder agrees, a swap takes place. No more signalling attempts are made till the end of the SF, and users simply continue sampling their chosen channels.
If the responder refuses, the initiator will approach the next-best channel on her list. She will continue the process until she (a) finds a partner that agrees to swap; or (b) exhausts her list of potential swaps. This part of the algorithm is depicted in \figref{fig:alg2}.

\section{Analysis}\label{sec:analysis}
We will now show that the CSM-MAB meets the goals defined in \secref{sec:goal}.
Our main theoretical result is stated in \thmref{thm:SMC}.

\begin{thm}\label{thm:SMC}
  Consider a system with $K$ channels and $N$ users, with channel rewards characterized by the matrix $\boldsymbol{\mu}$.
  Applying CSM-MAB (\algref{alg:CSM-MAB}) by all users will result in convergence to an orthogonal SMC:
  For all $\delta > 0$ there exists $T\paren{\delta}$ such that for all time slots $t>T$, the probability of the system's being in an SMC is at least $1-\delta$.
\end{thm}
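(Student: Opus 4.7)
The plan is to decompose the argument into three stages: (I) a bounded-time startup that produces an initial orthogonal configuration, (II) a concentration argument that forces the UCB-based preference lists to agree with the true ones, and (III) a potential-function analysis showing the swap dynamics reach an SMC in a geometrically bounded number of super frames. Stage I is immediate from the convergence guarantee of CFL \cite{Leith2012}: within $T_0 = O(K\log(1/\delta))$ slots the configuration is orthogonal with probability at least $1-\delta/3$, after which it suffices to index time by super frames of length $T_{\mathrm{SF}} = 2K$.

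For Stage II, I would exploit the fact that in each super frame every non-initiator, non-responder user samples her current channel during the $S_4$ slot of each mini-frame, giving her at least $K-2$ fresh Bernoulli samples per super frame. A union bound over Hoeffding's inequality for the empirical means $\hat\mu_{n,k}$, combined with the shrinking UCB confidence radius, yields an $M_1(\delta) = O(\log(1/\delta)/\Delta_{\min}^2)$ such that after $M_1$ super frames, with probability at least $1-\delta/3$ and for all subsequent $t$, the UCB indices $I_{n,k}(t)$ induce the same strict ordering on channels as the true means $\mu_{n,k}$ for every user $n$. Here $\Delta_{\min} \triangleq \min_{n,\,k\ne k'}|\mu_{n,k}-\mu_{n,k'}|$ is assumed positive; otherwise tie-breaking can be folded into the SMC definition.

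For Stage III, define the potential $\Phi(\pi) \triangleq \sum_{n=1}^N \mathrm{rank}_n(\pi(n))$, where $\mathrm{rank}_n(k)$ is the position of channel $k$ in user $n$'s true preference list, a variant of the function used in \cite{Floreen2010}. Conditioned on correct rankings, any successful action produced by the protocol is either a move by the initiator to an empty channel strictly higher on her list, or a mutual swap in which both participants strictly prefer their new assignment; in both cases $\Phi$ strictly decreases by at least one, so since $\Phi \in [N,NK]$ is integer-valued at most $NK$ productive super frames can occur. Moreover, whenever the configuration is not an SMC there exists a blocking pair $(n,m)$: when $n$ is the unique flag-raiser, she proposes her preferences in order and will, at latest upon reaching $a_m$, secure a successful swap (with $m$ by the definition of a blocking pair, or earlier with a willing responder or an empty channel). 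Since each swap-wanting user raises her flag independently with probability $\epsilon = 1/K$, the probability of selecting a unique initiator is bounded below by $p_0 \triangleq (1/K)(1-1/K)^{N-1} > 0$ uniformly in the history, so the time to SMC is stochastically dominated by a sum of at most $NK$ geometric random variables of parameter $p_0$, giving $M_2(\delta) = O\bigl((NK/p_0)\log(NK/\delta)\bigr)$ super frames with probability at least $1-\delta/3$. Taking $T(\delta) = T_0 + T_{\mathrm{SF}}\bigl(M_1(\delta)+M_2(\delta)\bigr)$ and collecting the three failure events yields the claim.

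The main obstacle is the coupling between Stages II and III: during learning the UCB lists may be wrong, so spurious swaps can permute the configuration arbitrarily. The clean resolution is that the algorithm is memoryless in the configuration --- once rankings are reliably correct, the configuration at that moment is simply the starting point of the Gale--Shapley-like dynamics of Stage III, and the potential bound then applies regardless of the preceding random history. The only subtle point to verify is that a user's sample count is not arbitrarily depleted by her being repeatedly chosen as initiator or responder; this is guaranteed by the $\Omega(K)$ idle $S_4$ samples per super frame invoked above, independently of the coordination outcomes.
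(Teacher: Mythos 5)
Your Stage II contains a genuine gap that the rest of the argument rests on. You claim that after $M_1(\delta)$ super frames the UCB indices $I_{n,k}(t)=\hat{\mu}_{n,k}+\sqrt{2\ln t/s_{n,k}}$ induce the true ordering of the $\mu_{n,k}$ \emph{for all subsequent} $t$. This cannot hold under the sampling pattern you yourself describe: each user only ever samples her \emph{current} channel (the $S_4$ slots), so for every arm $k\neq a_n$ the count $s_{n,k}$ is frozen while $t$ grows, and the exploration bonus $\sqrt{2\ln t/s_{n,k}}$ of those arms diverges. Eventually every unoccupied-by-$n$ arm has a larger index than her current arm regardless of the true means, so the UCB-induced ranking never stabilizes to the true one; Hoeffding concentration controls $\hat{\mu}_{n,k}$ only for arms that continue to be sampled, and says nothing about the index of an arm with a frozen sample count. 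Consequently your ``clean resolution'' --- treating the configuration at the moment rankings become permanently correct as the deterministic starting point of Gale--Shapley dynamics, after which at most $NK$ productive super frames occur --- is built on a premise that is false, and the whole Stage III counting argument (which is otherwise close in spirit to the paper's Lemmas 3 and 4 and its potential bound $N(K-1)$) is left unsupported.

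The paper avoids this trap by proving something weaker and assembling it differently: its Lemma 2 only asserts that, \emph{conditioned on a change in potential occurring at time} $t$ (with $t$ large enough that the relevant arms satisfy a sample-count condition), the change is a decrease with probability at least $1-2t^{-4}$; it explicitly acknowledges that a user can be misled into an uphill move and argues she will be corrected after resampling. It then combines this per-event bound with the initiator-selection probability $\e(1-\e)^{N-1}$ and a two-state Markov-chain / renewal argument over blocks of $\tau=t'N(K-1)$ slots, obtaining a failure probability of the form $(1-P_{\text{SMC}})^{\lfloor (T-t_{\min})/\tau\rfloor}$. If you want to keep your three-stage structure, you would need to either (a) replace the ``permanent correctness of rankings'' claim with a per-super-frame error probability and accept that the potential can occasionally increase (which forces the renewal-type analysis the paper uses), or (b) modify the algorithm so that all arms keep being sampled, which is not what CSM-MAB does.
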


The proof of \thmref{thm:SMC} consists of two aspects: orthogonality and stability. The first part is easy to verify.

\begin{prop}
  The actions of users applying CSM-MAB are orthogonal (i.e., there is at most one user sampling each channel) for all $t>t_0$ with probability of at least $1-\delta_0$.
\end{prop}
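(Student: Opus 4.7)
The plan is to decompose the statement into two pieces: (i) after the start-up phase the configuration is orthogonal with high probability, and (ii) once orthogonal, the dynamics of CSM-MAB preserve orthogonality deterministically. The time $t_0$ and the probability slack $\delta_0$ will both be absorbed into the start-up step, and the second step will then extend the property to all subsequent slots without any additional failure probability.

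For (i), I would invoke the convergence guarantee of the CFL algorithm from \cite{Leith2012}, which is called on line~1 of \algref{alg:CSM-MAB}. CFL drives $N$ users over $K\ge N$ channels to an orthogonal configuration in finite time with probability $1-\delta_0$. Calling $t_0$ the (random) termination time of the start-up phase and choosing it so that this event holds with probability at least $1-\delta_0$ gives the base case. Nothing deeper about CFL is needed here.

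For (ii), I would argue by induction on super frames that orthogonality is an invariant of the CSM-MAB main loop. Within any super frame there is at most one designated initiator (lines~6--10 explicitly require exactly one raised flag; otherwise no initiator is declared for that super frame and no user changes channel). If no initiator emerges, every user stays put and the configuration is unchanged. If an initiator $n$ is declared, the only channel transitions that can occur are those inside \textbf{swap}, called on line~16. There are exactly two cases for the target channel $k=list(pref)$ on which the initiator has just signalled: either $k$ was unoccupied at the start of the super frame, in which case only user $n$ moves (from her old channel, which is now vacant, to $k$); or $k$ was occupied by some user $m$ who agreed to respond, in which case $n$ and $m$ exchange channels. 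In both cases, starting from an orthogonal configuration we end with an orthogonal configuration. Moreover, the \textbf{propose\_swap} mechanism caps the number of swaps per super frame at one (line~17 sets $pref\gets 0$, terminating the proposal loop), so there is no interference between multiple simultaneous swaps within a super frame.

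The main (and essentially only) obstacle is making sure the case analysis of \textbf{swap} really does cover every way a user's channel can change; in particular one must verify from the protocol of \secref{sec:coord} that a non-initiator user $m$ only updates $a_m$ when she is the responder to an accepted swap (so that her move is simultaneous with the initiator's and into the initiator's vacated channel). Once this is established, orthogonality propagates deterministically for all $t>t_0$, and the failure probability remains the $\delta_0$ inherited from the start-up phase, proving the proposition.
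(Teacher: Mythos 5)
Your proposal is correct and follows essentially the same route as the paper: invoke the CFL start-up phase of \cite{Leith2012} to obtain orthogonality by some finite $t_0$ with probability $1-\delta_0$, then observe that the coordinated swap mechanism of CSM-MAB preserves orthogonality thereafter. The only difference is one of emphasis --- the paper spends its effort extracting explicit values of $t_0$ and $\delta_0$ from the exponential tail bound on CFL's stopping time and dismisses the preservation step in one sentence, whereas you treat the CFL guarantee as a black box and instead flesh out the (deterministic) invariance argument, which the proposition does not require to be more than a remark.
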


\begin{proof}
  Based on Theorem 1 of \cite{Leith2012}, the initial configuration reached after running the CFL algorithm is orthogonal with probability 1.
  The authors provide an upper bound on the distribution of stopping times, $\tau$:
  \begin{align*}
    \mP{\tau > k} = \alpha e^{-\gamma k},
  \end{align*}
  where $\alpha$ and $\gamma$ are some positive constants. The expected stopping time is therefore upper bounded by $\frac{\alpha e^{-\gamma}}{1-e^{-\gamma}}$.
  Thus, setting $t_0 \teq \frac{2\alpha e^{-\gamma}}{1-e^{-\gamma}}$, the probability of not having reached an orthogonal configuration by time $t_0$ is at most $\delta_0 \teq e^{-2\frac{\alpha e^{-\gamma}}{1-e^{-\gamma}}}$.
  Once the system reaches an orthogonal configuration, a user does not switch to an occupied channel without having coordinated the switch, as defined in \algref{alg:CSM-MAB}.
\end{proof}

\subsection{Stability and potential}\label{sec:potential}
Showing that our system converges to a stable solution is more involved. We begin by defining a potential function for the problem. For any user $n\in\set{1,\ldots,N}$, the potential at time $t$ is defined as follows:
\begin{align}\label{eq:user_potential}
  \phi_n\paren{t} \triangleq \sum_{k = 1}^K \mathds{1}\set{\mu_{n,k} > \mu_{n,a_n\paren{t-1}}},
\end{align}
where $a_n\paren{t-1}$ is the action taken by user $n$ in the previous time step.
In words, the potential is the number of channels user $n$ would prefer over her current choice, had she known their true reward distributions.
The system-wide potential is the sum of potentials over all users:
\begin{align}\label{eq:system_potential}
  \Phi\paren{t} \triangleq \sum_{n=1}^N \phi_n\paren{t}
\end{align}
An illustration of the potential appears in Tables 1 and 2.

\begin{table}[ht]
\centering
\caption{Table of users' channel rankings (first row represents best channel, last row represents worst). Cells highlighted in yellow and underline represent user's current choice.}
\label{tab:prefs}
\begin{tabular}{l|c|c|c|}
\cline{2-4}
                                 & \multicolumn{1}{l|}{\textbf{$U_1$}} & \multicolumn{1}{l|}{\textbf{$U_2$}} & \multicolumn{1}{l|}{\textbf{$U_3$}} \\ \hline
\multicolumn{1}{|l|}{\textbf{1}} & 1                                   & 2                                   & \cellcolor[HTML]{FFFE65}\underline{4}           \\ \hline
\multicolumn{1}{|l|}{\textbf{2}} & 2                                   & \cellcolor[HTML]{FFFE65}\underline{1}           & 1                                   \\ \hline
\multicolumn{1}{|l|}{\textbf{3}} & 4                                   & 3                                   & 2                                   \\ \hline
\multicolumn{1}{|l|}{\textbf{4}} & \cellcolor[HTML]{FFFE65}\underline{3}           & 4                                   & 3                                   \\ \hline
\end{tabular}
\end{table}

\begin{table}[ht]
\centering
\caption{User potentials corresponding to the configuration in \tabref{tab:prefs}.}
\label{tab:pot}
\begin{tabular}{|c|c|c|}
\hline
$\phi_1$ & $\phi_2$ & $\phi_3$ \\ \hline
3        & 1        & 0        \\ \hline
\end{tabular}
\end{table}

In terms of potential, a configuration is an SMC if no two users can swap channels and decrease their potential by doing so. We note that a stable configuration does not necessarily correspond to zero system-wide potential, since not all users might be able to achieve zero potential simultaneously, depending on network parameters. Also, a system may have several stable configurations, each characterized by a different potential.
Nevertheless, observing a system's potential does provide an indication regarding stability: once a system reaches a stable configuration, its potential will no longer change.

We prove convergence to an SMC by using the potential function, considering three aspects:
\begin{enumerate}
    \item The maximal potential of a system with $K$ channels and $N$ users is finite and equal to $N\paren{K-1}$.
    \item The potential $\Phi\paren{t}$ is monotonously non-increasing with high probability.
    \item Until an SMC is reached, changes in potential are bound to happen within finite time.
\end{enumerate}
We formalize and prove these statements in the sequel.

Since users' decisions are guided by UCB indices, while stability is examined with respect to true reward distributions, users do not always update their choice of channels in a way that matches the ground truth. Thus, the system potential may occasionally increase, due to users' exploration or inaccurate statistics. In our proof we show that despite this, users ultimately converge to a stable configuration.

\subsection{Proof of \thmref{thm:SMC}}
We begin by ensuring the monotonicity of the potential.
\begin{lem}\label{lem:monoPot}
For all times $t$ for which $t > \frac{16K}{\Delta_{\min}^2}\ln t$, if a change in potential occurs, it is a decrease, with probability of at least $1-2t^{-4}$.
\end{lem}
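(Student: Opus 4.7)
The plan is a standard two-step argument: first, identify a high-probability good event on which every UCB ranking aligns with the corresponding true-mean ranking; second, show that on this event any swap executed by \algref{alg:CSM-MAB} strictly decreases the joint potential of the two participating users, while the potentials of all other users are unaffected.

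For the first step I would apply Hoeffding's inequality to each empirical mean $\hat{\mu}_{n,k}$. When $s_{n,k}\geq 32\ln t/\Delta_{\min}^2$, two things happen simultaneously: $|\hat{\mu}_{n,k}-\mu_{n,k}|<\Delta_{\min}/4$ with probability at least $1-2t^{-4}$ per pair, and the confidence bonus $\sqrt{2\ln t/s_{n,k}}$ is also at most $\Delta_{\min}/4$. Combining these gives $|I_{n,k}(t)-\mu_{n,k}|<\Delta_{\min}/2$, which forces the UCB ordering to agree with the true ordering on any pair of channels whose means differ by at least $\Delta_{\min}$. To turn this sample threshold into the stated condition on $t$, I would invoke the protocol of \secref{sec:coord}: a super frame has length $2K$ and yields at least $K-2$ samples of the current channel for every non-initiator user, so an occupied channel accumulates samples at a rate of order one per slot. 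The hypothesis $t>16K\ln t/\Delta_{\min}^2$ then implies $s_{n,k}\geq 32\ln t/\Delta_{\min}^2$, with the factor $K$ and a union bound over the $(n,k)$ pairs absorbed into the $2t^{-4}$ tail.

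For the second step, observe that $\Phi$ changes only at the end of a super frame in which an initiator $n$ and a responder $m$ executed a swap, since the channel assignments of all other users are untouched. On the good event the initiator's decision $I_{n,k}>I_{n,a_n}$ implies $\mu_{n,k}>\mu_{n,a_n}$, so $\phi_n$ strictly decreases; the responder's agreement $I_{m,a_n}>I_{m,k}$ implies $\mu_{m,a_n}>\mu_{m,k}$, so $\phi_m$ strictly decreases. Each $\phi_\ell$ for $\ell\neq n,m$ is unchanged because $a_\ell$ did not change, hence $\Phi(t)<\Phi(t-1)$ as required.

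The delicate point is the uniform lower bound on $s_{n,k}$ for channels a user has \emph{not} currently been occupying, since the protocol samples only the current channel and an unvisited $k$ has $I_{n,k}=\infty$, which could drive a swap that violates the good event. I would handle this by noting that in any proposed swap the responder is by construction currently on the target channel and so has $s_{m,k}=\Omega(t)$, while the initiator has $s_{n,a_n}=\Omega(t)$ for the same reason; the remaining cross-terms $s_{n,k}$ and $s_{m,a_n}$ are the bottleneck, and one must argue either that they are well-sampled after the early, exploration-driven swaps have equalized visitation, or that the contribution of rare cross-term events to the failure probability is dominated by the $2t^{-4}$ bound for $t$ above the stated threshold.
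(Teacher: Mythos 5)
Your overall strategy is the same as the paper's: condition on a swap, use Chernoff--Hoeffding concentration of the UCB indices to argue that with probability at least $1-2t^{-4}$ the indices of the two channels involved reflect the true ordering, and conclude that the executed swap strictly benefits both the initiator and the responder while leaving every other user's assignment (hence potential) unchanged. The paper gets the concentration step by invoking the proof of Theorem~1 of \cite{Auer2002a} directly, under the sample-count condition $s \geq 8\ln t/\Delta_{\min}^2$; your rederivation via Hoeffding with $\Delta_{\min}/4$ margins is equivalent up to constants. One quantitative discrepancy: you build a global good event by a union bound over all $(n,k)$ pairs and claim the resulting multiplicity can be ``absorbed'' into the $2t^{-4}$ tail. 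It cannot without degrading the constant to order $NKt^{-4}$. The paper sidesteps this by conditioning on the particular change in potential that occurs and bounding only the probability that the single index comparison driving \emph{that} change is erroneous, which is what yields the clean $2t^{-4}$ in the statement.

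The genuine gap is the one you flag yourself and then leave open: the initiator's count $s_{n,k}$ of the target channel and the responder's count $s_{m,a_n}$ of the initiator's channel need not satisfy the concentration precondition, because the protocol only samples a user's \emph{current} channel. Your proposal ends by listing two possible ways out without carrying either through, so the lemma is not actually established. The paper does commit to a resolution, in two parts: (i) it requires the sample condition to hold for all $K$ arms and converts it into the time condition by inflating the threshold by the factor $K \cdot T_{\text{SF}}/(K-2) = 2K^2/(K-2)$, which is where the stated $16K/\Delta_{\min}^2$ (in fact $16K^2/((K-2)\Delta_{\min}^2)$) comes from; and (ii) for the regime where an arm is under-sampled and its index is misleadingly high, it argues informally that the erroneous swap is self-correcting --- once the user switches to the over-rated arm she samples it, its index falls, the abandoned arm's index rises with $\ln t$, and the preference reverses. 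Neither part is airtight (part (i) tacitly assumes each user's samples are spread roughly evenly over all $K$ arms, which the current-channel-only sampling rule does not obviously deliver, and part (ii) concerns events outside the probability bound being claimed), so your instinct that this is the weak joint of the argument is correct; but a complete proof must say \emph{something} here, and yours does not.
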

$\Delta_{\min}$ is a distribution dependent constant.
In the appendix we derive an upper bound on the minimal time for which the condition above holds:
\begin{align}\label{eq:t_min}
  t_{\min} \leq \frac{M-1-\sqrt{\paren{M-1}^2 -4M}}{2},
\end{align}
where $M \triangleq \frac{16K}{\Delta_{\min}^2}$. This bound will enable us to use $t_{\min}$ in the proof.

Next, we introduce a lemma that concerns the ability of a single user to reach the position of the initiator.
\begin{lem}\label{lem:initiator}
If $\phi_n\paren{t} > 0$ for some user $n$, then her probability of becoming the next initiator is at least $\e\paren{1-\e}^{N-1}$.
\end{lem}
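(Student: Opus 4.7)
The plan is to unpack lines 5--11 of CSM-MAB and reduce the lemma to an elementary independence computation, with only one subtlety worth flagging.

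First, I would connect the hypothesis $\phi_n(t) > 0$ with the condition $list \neq \mathbf{0}$ tested in line 5. The potential counts channels that are \emph{truly} better than $a_n(t-1)$ for user $n$, whereas rank\_channels builds its list from UCB indices. In the regime where this lemma is intended to be applied (together with \lemref{lem:monoPot}, i.e., after enough samples for the concentration underlying UCB to kick in), the UCB ranking agrees with the ranking induced by the true means, so $\phi_n(t) > 0$ forces user $n$ into the flag-flipping branch. I would state this reduction explicitly at the start.

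Second, I would identify the exact event that makes user $n$ the initiator of the next super frame. By line 7, this is
\begin{align*}
E_n \triangleq \{flag_n = 1\} \cap \bigcap_{i \neq n}\{flag_i = 0\},
\end{align*}
where each $flag_j$ is an independent Bernoulli$(\epsilon)$ draw made by user $j$ conditional on her reaching line 6. Any user who does not reach that line has $flag_j = 0$ by default, which is only favourable for $n$.

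Third, I would compute the bound by taking the worst case over the other users' behaviour. Inserting an additional flipping user $i\ne n$ can only \emph{increase} the probability that some flag other than $n$'s equals $1$; hence the worst case is when all $N-1$ other users also have nonempty lists and draw independent Bernoulli$(\epsilon)$ coins. By independence of those draws, $\mathbb{P}(E_n) \geq \epsilon \cdot (1-\epsilon)^{N-1}$, which is exactly the claimed lower bound.

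There is no serious obstacle: the body of the proof is essentially one line of elementary probability. The only step that deserves care is the first one --- the bridge between the true-mean potential $\phi_n$ and the UCB-derived list --- and that is precisely the reason \lemref{lem:initiator} will be deployed alongside \lemref{lem:monoPot} in the main proof of \thmref{thm:SMC}.
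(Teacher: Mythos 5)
Your proof is correct and takes essentially the same route as the paper's: condition on the set of ``interested'' users, use independence of the Bernoulli$\paren{\epsilon}$ flags, and observe that the worst case is all $N$ users competing, which yields exactly $\epsilon\paren{1-\epsilon}^{N-1}$. The bridge you flag between the true-mean potential $\phi_n$ and the UCB-derived list is left implicit in the paper's proof of this lemma (the accuracy of the list is accounted for separately, in the proof of \lemref{lem:finite_time_till_switch}), so your extra care there is warranted but does not change the argument.
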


Using \lemref{lem:initiator}, we show another result:
\begin{lem}\label{lem:finite_time_till_switch}
If the system is not in an SMC at some time $t$, then a change in the potential will occur within no more than $t^\prime\paren{\delta_1}$ time slots with probability of at least $1-\delta_1$.
\end{lem}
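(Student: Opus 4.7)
The plan is to find a ``witness'' pair of users whose presence shows the current configuration is not an SMC, and then argue that within a bounded number of super frames one of them becomes the initiator and successfully swaps. Since the configuration at time $t$ is not an SMC, \defref{def:SMC} supplies a pair $n,m$ with $\mu_{n,a_n} < \mu_{n,a_m}$ and $\mu_{m,a_m} \leq \mu_{m,a_n}$. The strict inequality forces $\phi_n(t) \geq 1$, so by \lemref{lem:initiator} user $n$ becomes the sole initiator in each super frame with probability at least $p_0 \teq \e(1-\e)^{N-1}$, independently across super frames; the waiting time until this happens is stochastically dominated by a geometric random variable with parameter $p_0$.

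Next I would introduce a good event $E$ asserting that the UCB indices of $n$ and $m$ on the two channels $a_n,a_m$ are concentrated enough to recover the true preference order, namely $I_{n,a_m}(t) > I_{n,a_n}(t)$ and $I_{m,a_n}(t) > I_{m,a_m}(t)$. This is exactly the concentration underlying \lemref{lem:monoPot}: by Hoeffding and the fact that every non-initiator accumulates at least $K-2$ samples per super frame, $\mathbb{P}(E^c)$ decays polynomially in $t$ past the threshold of \lemref{lem:monoPot}. On $E$, when $n$ becomes initiator she ranks $a_m$ above $a_n$ in her preference list; walking down this list she either (i) already swaps with some earlier responder, which changes the potential and finishes the proof, or (ii) eventually proposes $a_m$, in which case $m$ is the responder and, again on $E$, ranks $a_n$ above $a_m$ and accepts. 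In both cases the potential changes.

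Finally I would take $t'(\delta_1) \teq T_{\text{SF}} \cdot \ceil{p_0^{-1} \ln(2/\delta_1)}$. Over this window, the probability that $n$ is never chosen as initiator is at most $(1-p_0)^{\ceil{p_0^{-1} \ln(2/\delta_1)}} \leq \delta_1/2$, while summing the $O(t^{-4})$ tail of $\mathbb{P}(E^c)$ over the window contributes at most $\delta_1/2$ once $t$ exceeds the threshold of \lemref{lem:monoPot} (which can be folded into $t_{\min}$). A union bound then yields the claimed probability $1-\delta_1$.

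The main obstacle is a subtle accounting issue: during the window, some \emph{other} user $n'$ may become initiator first and execute an unrelated swap, so the configuration, the witness pair, and even whether $n$ still has $\phi_n>0$ may all change mid-proof. This is in fact favourable --- any change in the potential during the window already proves the lemma --- but the clean formulation is to stop the argument at the first super frame in which the potential changes. If no such super frame occurs within the window, the configuration is invariant throughout, so the witness pair $(n,m)$ is well-defined throughout and the geometric bound on the first initiator-selection of $n$ applies verbatim.
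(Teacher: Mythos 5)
Your proposal is correct and follows essentially the same route as the paper's proof: exhibit a witness pair $(n,m)$ from \defref{def:SMC}, lower-bound the per-super-frame probability that the right user becomes sole initiator via \lemref{lem:initiator}, invoke the UCB concentration underlying \lemref{lem:monoPot} so that $n$ ranks $a_m$ above $a_n$ and $m$ accepts, and combine the geometric decay over super frames with the polynomial index-failure probability to extract $t'(\delta_1)$. Your explicit stopping-time argument for the case where an unrelated swap perturbs the configuration mid-window is a point the paper glosses over, and is a welcome tightening rather than a deviation.
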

The exact dependency of $t^\prime$ on $\delta_1$ appears in the appendix, as do the proofs of all lemmas.

The probability of the system's reaching an SMC within
$\tau\triangleq t^\prime N\paren{K-1}$ time slots after time $t_{\min}$ is at least
\begin{align*}
  P_{\text{SMC}}\teq \sbrk{\paren{1-\delta_1}\paren{1-2t_{\min}}^{-4}}^{N\paren{K-1}}.
\end{align*}
We model the convergence to an SMC using a Markov chain. Let $S_t$ denote the state of the system at time $t$:
\begin{align*}
  S_t =
  \begin{cases}
    0 & \text{if in SMC},\\
    1 & \text{else}.
  \end{cases}
\end{align*}
The following holds for the chain's transition probability:
\begin{align*}
  \cP{S_{t+\tau} = 1}{S_{t_{\min}} = 0} \geq P_{\text{SMC}},
\end{align*}
and also
\begin{align*}
  \cP{S_T = 0}{S_{t_{\min}} = 0} \leq \paren{1-P_{\text{SMC}}}^{\floor{\frac{T-t_{\min}}{\tau}}},\\
  \forall T > t_{\min} + \tau.
\end{align*}
Defining $\delta \teq \paren{1-P_{\text{SMC}}}^{\floor{\frac{T-t_{\min}}{\tau}}}$ completes the proof, and inverting yields
\begin{align*}
  T = t_{\min} + \tau\frac{\ln \delta}{\ln\paren{1-P_{\text{SMC}}}}.
\end{align*}

Our next result quantifies the time devoted to signalling.
\begin{prop}
  In every super-frame $\paren{K-1}\paren{N-2}$ learning samples are gathered by all users combined. During this period $4K$ signalling and sensing actions are performed by all users combined, so the signalling to learning ratio is
  \begin{align*}
    L\triangleq \frac{4K}{\paren{K-1}\paren{N-2}}.
  \end{align*}
\end{prop}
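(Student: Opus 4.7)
The proof is essentially an accounting exercise driven by the super-frame structure laid out in \secref{sec:coord}, so my plan is to simply account for the different roles each time slot plays and tally the contributions per super-frame.

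First I would restate the super-frame layout: each super-frame has length $T_{\text{SF}} = 2 + 2(K-1) = 2K$, consisting of the two opening slots $S_1, S_2$ used for initiator selection, followed by $K-1$ mini-frames of two slots $S_3, S_4$ each. The plan is to partition these $2K$ slots into ``coordination slots'' (those used for signalling or sensing, involving at most the initiator and a single responder) and ``learning slots'' (those available to the remaining $N-2$ users for sampling their current channel).

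Next I would count the learning samples. By construction, the only slots in which a user actually samples a channel for the purpose of updating $\hat{\mu}_{n,k}$ and $s_{n,k}$ are the $S_4$ slots in the $K-1$ mini-frames; moreover, in each such slot the initiator is transmitting/sensing on the proposed channel and the corresponding responder is reacting to the proposal, so exactly $N-2$ users are free to sample. Summing over the $K-1$ mini-frames gives the claimed $(K-1)(N-2)$ learning samples per super-frame.

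For the signalling and sensing count, I would observe that every one of the $2K$ slots in the super-frame is used for coordination by exactly two parties: in $S_1, S_2$ the candidate initiators signal and all users sense to agree on a single initiator (a signal--sense pair per slot), and in each mini-frame $S_3$ carries the initiator's proposal paired with the responder's sensing, while $S_4$ carries the responder's acknowledgement paired with the initiator's sensing. That is $2$ coordination actions per slot times $2K$ slots, for a total of $4K$ signalling/sensing actions per super-frame. Dividing then gives $L = 4K / [(K-1)(N-2)]$. The only obstacle I foresee is a minor ambiguity in what exactly counts as a ``signalling and sensing action'' in the opening slots $S_1, S_2$; I would resolve this by adopting the convention implicit in \algref{alg:CSM-MAB}, namely that each slot contributes the fixed pair (transmit, sense) between the two coordinating parties, which is the only convention consistent with the stated total of $4K$.
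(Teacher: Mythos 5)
Your accounting is correct and matches the paper's implicit reasoning: the paper states this proposition without an explicit proof, relying on exactly the count you reconstruct from \secref{sec:coord} ($N-2$ free samplers in each of the $K-1$ mini-frames, since the initiator and the current responder are occupied, and two coordination actions per slot over the $2K$ slots of a super-frame giving $4K$). The one judgment call---what counts as a signalling/sensing action in $S_1,S_2$---you flag explicitly and resolve in the only way consistent with the stated total, which is all that can be done given that the paper itself leaves this convention unstated.
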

Clearly, the effort the users put into coordination is most effective when the number of users is close to the number of channels. This is a result of the frames' length being dictated by the number of channels rather than the number of users, in order for the user-level algorithm to be independent of the number of users.

\section{Experiments}\label{sec:experiments}
To demonstrate the merits of our algorithm, we implement a simulation of a distributed multi-user communication network.
The users in our network are synchronized, and time is slotted.

In this network, users cannot communicate with each other directly. However, they can sense the entire frequency range (i.e., listen to all channels). They may also transmit over a channel of their choice, updating this choice each time slot.

A user $n$ transmitting over a channel $k$ receives a binary reward, drawn i.i.d. from a Bernoulli distribution with parameter $\mu_{n,k}$. This can be viewed as a form of the classic binary symmetric channel. As far as the different values of the reward parameters go, we ran experiments in two different modes:
\begin{enumerate}
  \item random: the $\mu_{n,k}$'s are drawn uniformly and independently from the interval $\sbrk{0,1}$.
  \item real-world: users are divided into clusters, and each cluster has a preferred group of channels. This represents a scenario in which users sharing a cluster are geographically close, and experience an interference in part of the frequency range. In real-world wireless communication systems, an agent that does not belong to the network but is transmitting in its vicinity will often cause a similar phenomenon.
\end{enumerate}

We present results obtained in an experiment with $K=12$ channels and $N=10$ users. The users are divided into two clusters. Users 1-5 belong to one cluster, and experience an interference in the frequency range of channels 7-12. Users 6-10, on the other hand, experience similar performance over the entire frequency range.
Experiments last $T=120000$ time slots, and results are averaged over 50 repetitions.

We begin by examining the cumulative number of policy changes per user over time, plotted in \figref{fig:policyChange} and in \figref{fig:policyChangeEmp}.
Since our goal is stability, we would like the number of policy changes to be small, and indeed the rate of changes decreases significantly over time. Another observation, demonstrated by the two figures, is that different users have different patterns, depending on the realization but more importantly on the difficulty of their problem: users that have small differences between channels will need more samples in order to tell them apart, and will therefore experience more policy changes.

\begin{figure}
\centering
\includegraphics[width=0.75\textwidth]{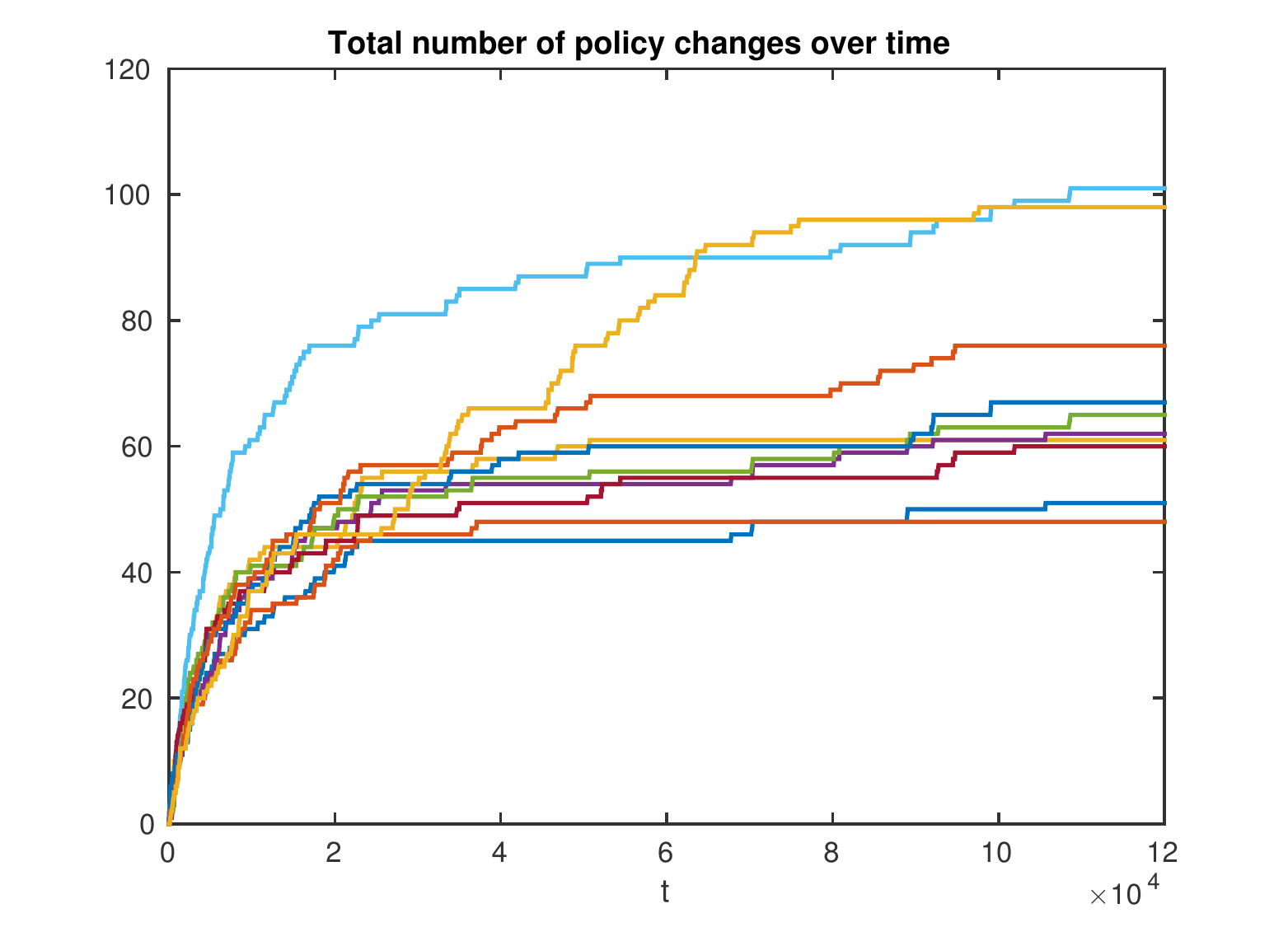}
\caption{Changes in users' choice of channels, single realization}
\label{fig:policyChange}
\end{figure}

\begin{figure}
\centering
\includegraphics[width=0.75\textwidth]{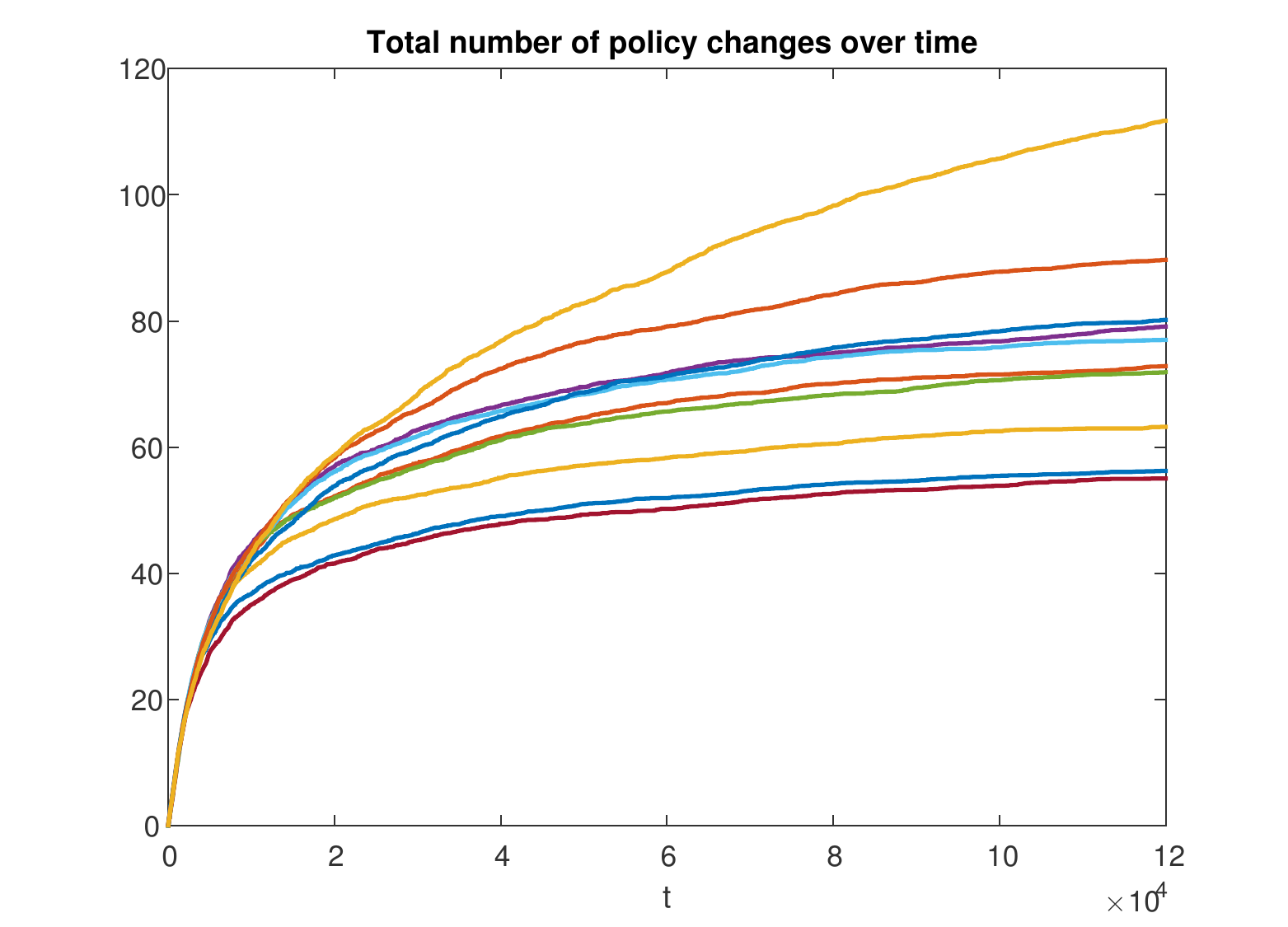}
\caption{Changes in users' choice of channels, empirical average}
\label{fig:policyChangeEmp}
\end{figure}

Our next result examines the convergence to different SMCs over several repetitions of one setup. In this case, the set of SMCs consists of 305 configurations. Naturally, the size of this set depends on the number of users, $N$, the number of channels, $K$, and also on the specific realization of the $\mu_{n,k}$'s.
\figref{fig:imageSM} shows that the periods of time users spend in unstable configurations decrease as the experiment advances, and users move between different SMCs, depending on the realization.

\begin{figure}
\centering
\includegraphics[width=0.75\textwidth]{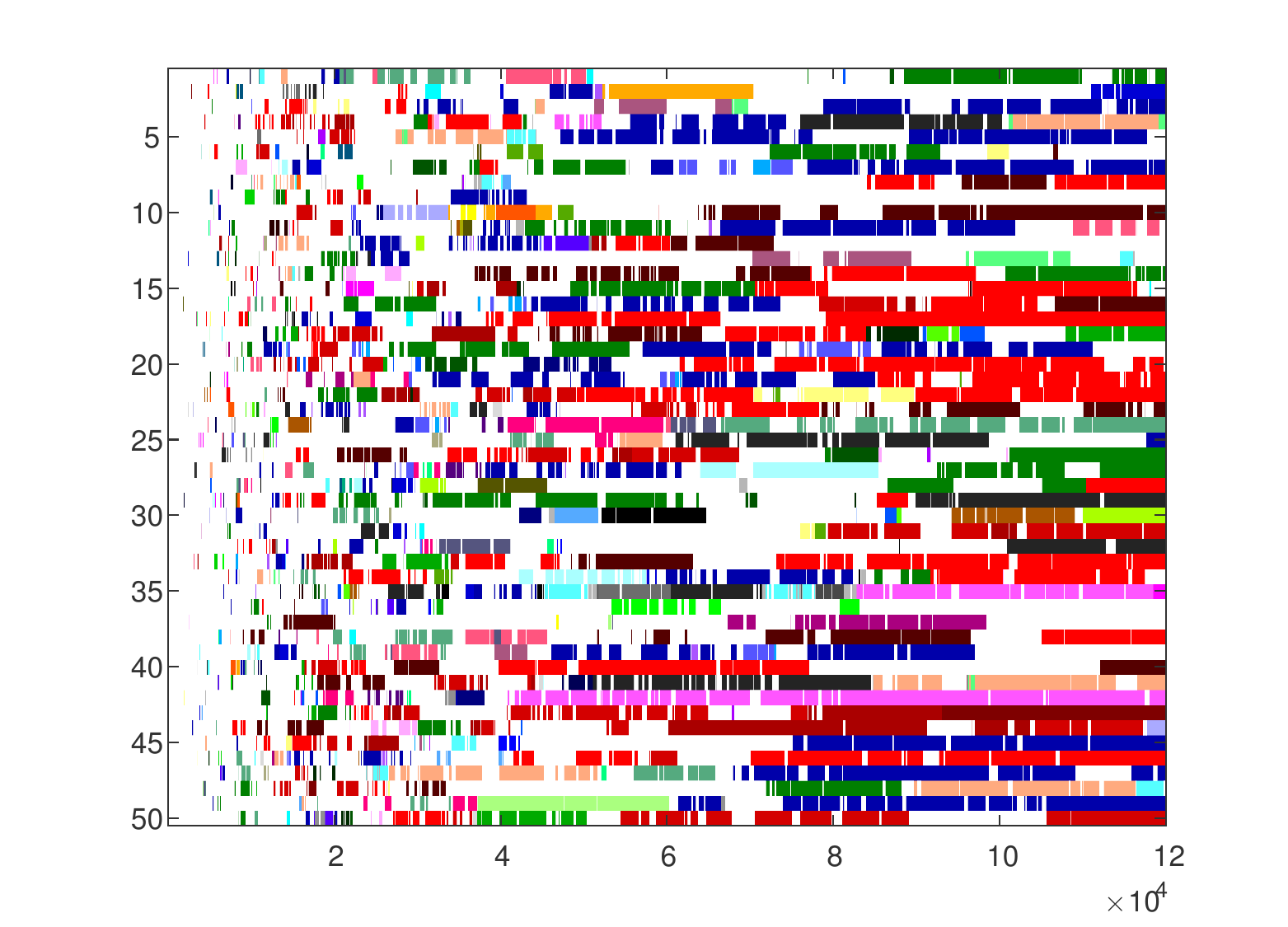}
\caption{Convergence to SMC for different realizations: horizontal axis shows time, vertical axis shows numbering of realizations. White pixels represent unstable configurations, other colors correspond to different SMCs. As time goes by, longer stretches of time are spent in SMCs.}
\label{fig:imageSM}
\end{figure}

To complement our proof, we provide a visualization of the system potential over time, averaged over several repetitions, in \figref{fig:sysPotential}. As shown in the proof, the potential decays on average. The shaded area around the plot represents the variance over iterations, which also decays over time. As explained in \secref{sec:potential}, the potential does not necessarily decay to zero, but rather to a constant value that represents the potential of the SMC. 

\begin{figure}
\centering
\includegraphics[width=0.75\textwidth]{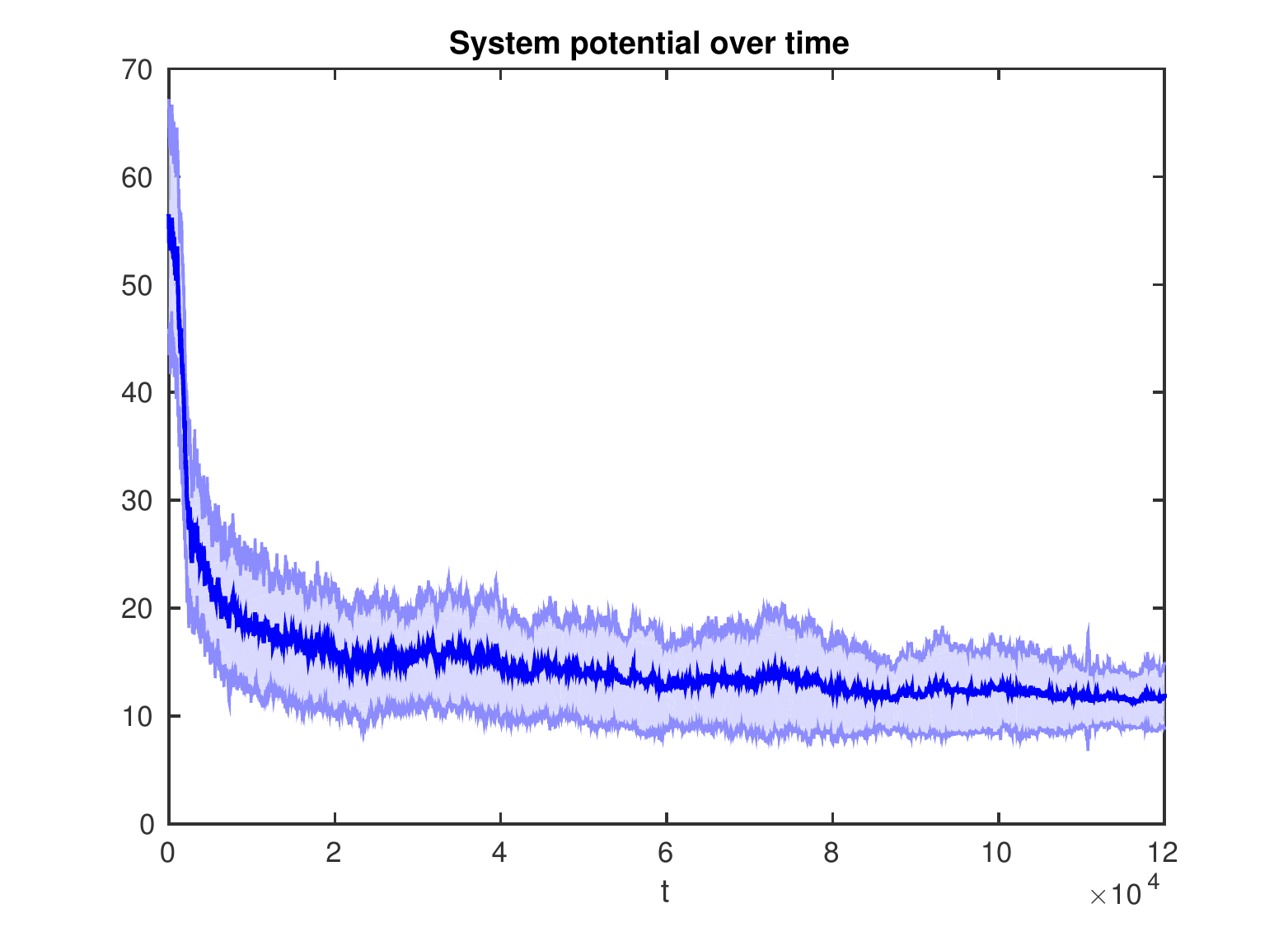}
\caption{Decay of system potential over time, averaged over 50 repetitions. The shaded area represents variance.}
\label{fig:sysPotential}
\end{figure}

Our last result examines the reward acquired by users employing the CSM-MAB algorithm. While our theoretical guarantees focus on stability, the algorithm incorporates reward maximization implicitly by using UCB indices to rank channels. However, as explained in \secref{sec:opt}, reaching a reward-optimal configuration cannot be guaranteed with the limited form of communication we allow. In \figref{fig:reward} we compare the cumulative system-wide reward of two algorithms: our CSM-MAB and the dUCB4 algorithm, introduced in \cite{Kalathil2014}. As explained in \secref{sec:prevWork}, dUCB4 incorporates an auction algorithm in order to achieve an orthogonal reward maximizing configuration.

The price of reward maximization is, clearly, communication, which our scheme attempts to bring to a minimum. In order to implement the auction algorithm required by dUCB4, users must have distinct id's and knowledge of the number of users. This rather technical requirement hinders the ability of the algorithm to deal with a variable number of users. Our algorithm naturally extends to a scenario in which users arrive and leave at random times, that is quite likely in the context of CRNs. In addition, auction algorithms inherently rely on the good will of users, and are therefore more vulnerable to malicious agents (e.g., agents that report false high bids for attractive channels).

The results in \figref{fig:reward} demonstrate the tradeoff between communication and reward maximization: the time dUCB4 invests in auctioning is quite dominant. The two variants of the algorithm differ in the accuracy of the auctioning algorithm. The ``dUCB4'' variant (dotted red) uses 32 bits to encode variables, while the ``dUCB4Long'' variant (dashed magenta) uses 64 bits. Because of auctioning, it takes the algorithm a long time to turn its focus to reward maximization. In the high-accuracy case, the users exhaust all their time auctioning. In the low-accuracy case, they only begin acquiring rewards towards the end of the experiment. In real-world networks, with constantly changing conditions, such a long start-up phase is difficult to overlook. For the sake of example, let us examine an average 802.11n WLAN network, with a nominal frame size of 2000 bits and typical bit rate of 25 megabits per second. The $4\cdot10^5$ time slots it takes dUCB4 to start acquiring rewards are translated into a period of $\frac{4\cdot10^5\cdot2000}{25\cdot10^6}=32\text{sec}$. This start-up phase doubles to over one minute when 64 bit accuracy is used for the auction algorithm. Of course, lighter schemes than the 802.11 can be used, but these numbers clearly demonstrate the potentially crippling overhead brought on by communication.

\begin{figure}
\centering
\includegraphics[width=0.75\textwidth]{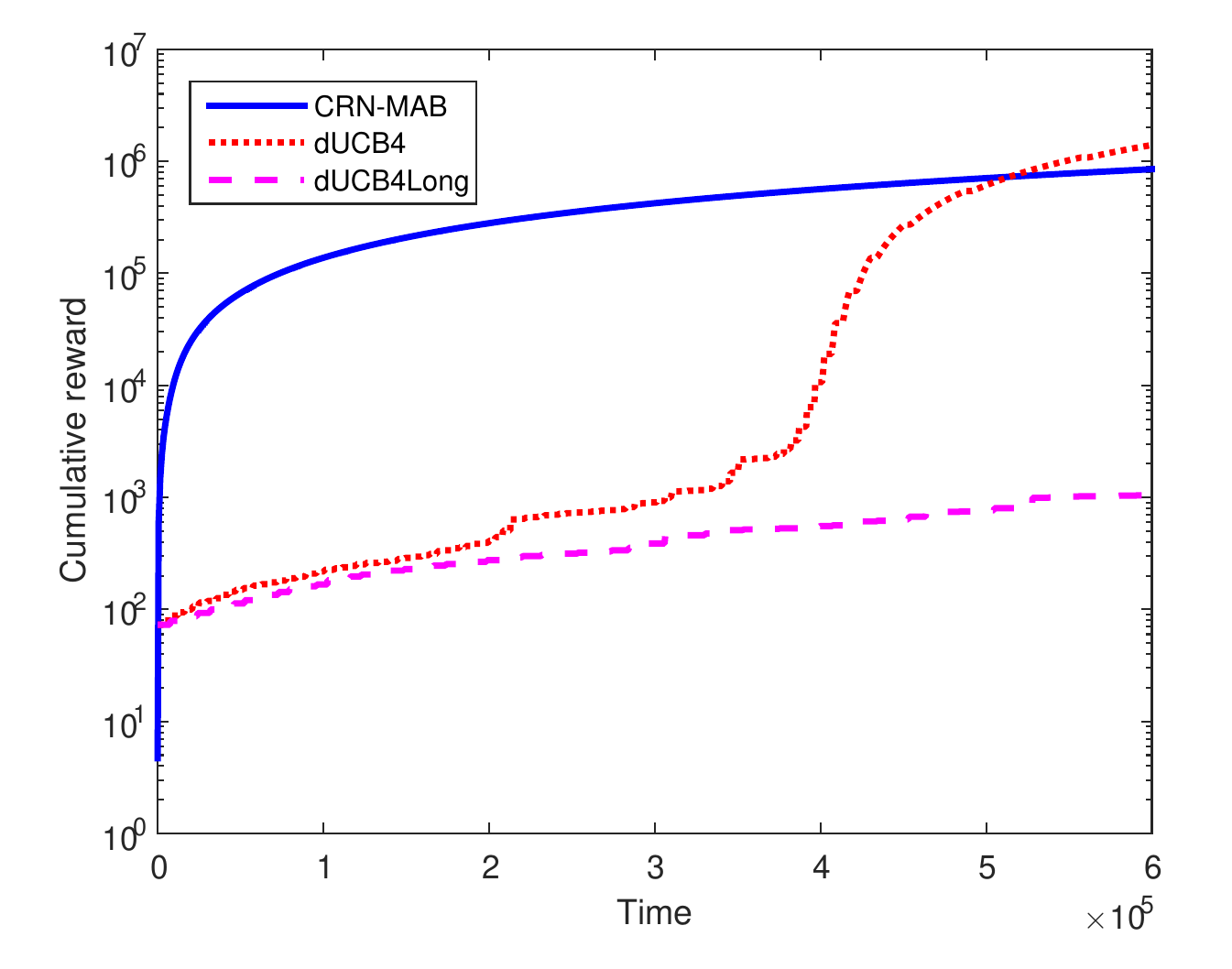}
\caption{Cumulative system-wide reward over time, for different algorithms}
\label{fig:reward}
\end{figure}

We note that when $N$ is strictly less than $K$, our algorithm often reaches the reward optimal configuration, or a configuration very similar in reward values. Therefore, the variance of the cumulative reward is very small. Our intuitive explanation is that when $N<K$ users have a certain degree of freedom, increasing their chances of landing in the optimal configuration.

Despite reaching a configuration that is very close to optimal in the presented simulations, our algorithm acquires reward at a slower rate than dUCB4, due to the constant ratio of coordination and exploitation. Decreasing the amount of time devoted to coordination may considerably increase the reward, at the cost of impairing the algorithm's ability to handle a variable number of users. We plan to address this issue in detail in the future.

\section{Discussion}\label{sec:discussion}
We present an extension of the multi-user MAB problem, for the case of different reward distributions between the users, with limited information exchange.
Using a specialized signalling method, our algorithm enables multiple users to learn network characteristics and converge to an orthogonal configuration that is also a stable marriage.
We provide a theoretical analysis of our algorithm's performance, based on the notion of system potential.
Finally, we present the results of an experimental setup and examine different aspects of our approach's performance, including a comparison to the dUCB4 algorithm of \cite{Kalathil2014}. As explained in \secref{sec:experiments} in further detail, the main difference between the algorithms is the way they strike a balance between minimizing communication and maximizing the reward. We argue that our algorithm is better suited for real world problems.

In the future we intend to extend our work to a dynamic scenario, both in terms of channel characteristics and number of users. The latter should be straightforward due to the minimal inter-dependency of users, while the former will require some adjustment of the learning algorithm.
Another interesting variant, applicable to networks with a fixed number of users, alters the ratio between coordination and exploitation as time goes by, to enable better use of network resources.

\bibliographystyle{IEEEtran}
\bibliography{main}

\begin{thebibliography}{10}
\providecommand{\url}[1]{#1}
\csname url@samestyle\endcsname
\providecommand{\newblock}{\relax}
\providecommand{\bibinfo}[2]{#2}
\providecommand{\BIBentrySTDinterwordspacing}{\spaceskip=0pt\relax}
\providecommand{\BIBentryALTinterwordstretchfactor}{4}
\providecommand{\BIBentryALTinterwordspacing}{\spaceskip=\fontdimen2\font plus
\BIBentryALTinterwordstretchfactor\fontdimen3\font minus
  \fontdimen4\font\relax}
\providecommand{\BIBforeignlanguage}[2]{{%
\expandafter\ifx\csname l@#1\endcsname\relax
\typeout{** WARNING: IEEEtran.bst: No hyphenation pattern has been}%
\typeout{** loaded for the language `#1'. Using the pattern for}%
\typeout{** the default language instead.}%
\else
\language=\csname l@#1\endcsname
\fi
#2}}
\providecommand{\BIBdecl}{\relax}
\BIBdecl

\bibitem{Mitola1999}
J.~Mitola and G.~Maguire, ``Cognitive radio: making software radios more
  personal,'' \emph{Personal Communications, IEEE}, 1999.

\bibitem{Akyildiz2008}
I.~Akyildiz, L.~Won-Yeol, M.~Vuran, and S.~Mohanty, ``A survey on spectrum
  management in cognitive radio networks,'' \emph{Communications Magazine,
  IEEE}, vol.~46, no.~4, pp. 40--48, April 2008.

\bibitem{Haykin2005}
S.~Haykin, ``Cognitive radio: brain-empowered wireless communications,''
  \emph{Selected Areas in Communications, IEEE Journal on}, vol.~23, no.~2, pp.
  201 -- 220, February 2005.

\bibitem{Jouini2010}
W.~Jouini, D.~Ernst, C.~Moy, and J.~Palicot, ``Multi-armed bandit based
  policies for cognitive radio's decision making issues,'' in \emph{Signals,
  Circuits and Systems (SCS), 2009 3rd International Conference on}.\hskip 1em
  plus 0.5em minus 0.4em\relax IEEE, 2010, pp. 1--6.

\bibitem{Auer2002a}
P.~Auer, N.~Cesa-Bianchi, and P.~Fischer, ``Finite-time analysis of the
  multiarmed bandit problem,'' \emph{Machine learning}, vol.~47, no.~2, 2002.

\bibitem{Auer2010}
P.~Auer and R.~Ortner, ``{UCB} revisited: Improved regret bounds for the
  stochastic multi-armed bandit problem,'' \emph{Periodica Mathematica
  Hungarica}, vol.~61, no.~1, pp. 55--65, 2010.

\bibitem{Garivier2011}
A.~Garivier and O.~Capp{\'e}, ``The {KL-UCB} algorithm for bounded stochastic
  bandits and beyond,'' in \emph{Conference On Learning Theory}, 2011, pp.
  359--376.

\bibitem{Auer2002b}
P.~Auer, N.~Cesa-Bianchi, Y.~Freund, and R.~Schapire, ``The nonstochastic
  multiarmed bandit problem,'' \emph{SIAM Journal on Computing}, 2002.

\bibitem{Kuhn1955}
H.~Kuhn, ``The hungarian method for the assignment problem,'' \emph{Naval
  research logistics quarterly}, vol.~2, no. 1-2, pp. 83--97, 1955.

\bibitem{Bertsekas1988}
D.~Bertsekas, ``The auction algorithm: A distributed relaxation method for the
  assignment problem,'' \emph{Annals of operations research}, 1988.

\bibitem{Gale1962}
D.~Gale and L.~Shapley, ``College admissions and the stability of marriage,''
  \emph{American mathematical monthly}, pp. 9--15, 1962.

\bibitem{Cohen2013}
K.~Cohen, A.~Leshem, and E.~Zehavi, ``Game theoretic aspects of the
  multi-channel aloha protocol in cognitive radio networks,'' \emph{Selected
  Areas in Communications, IEEE Journal on}, vol.~31, 2013.

\bibitem{Leshem2012}
A.~Leshem, E.~Zehavi, and Y.~Yaffe, ``Multichannel opportunistic carrier
  sensing for stable channel access control in cognitive radio systems,''
  \emph{Selected Areas in Communications, IEEE Journal on}, vol.~30, 2012.

\bibitem{Floreen2010}
P.~Flor{\'e}en, P.~Kaski, V.~Polishchuk, and J.~Suomela, ``Almost stable
  matchings by truncating the gale--shapley algorithm,'' \emph{Algorithmica},
  vol.~58, no.~1, pp. 102--118, 2010.

\bibitem{Amira2010}
N.~Amira, R.~Giladi, and Z.~Lotker, ``Distributed weighted stable marriage
  problem,'' in \emph{Structural Information and Communication
  Complexity}.\hskip 1em plus 0.5em minus 0.4em\relax Springer, 2010, pp.
  29--40.

\bibitem{Gonczarowski2014}
Y.~Gonczarowski and N.~Nisan, ``A stable marriage requires communication,''
  \emph{arXiv preprint arXiv:1405.7709}, 2014.

\bibitem{Kipnis2009}
A.~Kipnis and B.~Patt-Shamir, ``A note on distributed stable matching,'' in
  \emph{IEEE International Conference on Distributed Computing Systems}, 2009.

\bibitem{Anandkumar2011}
A.~Anandkumar, N.~Michael, A.~Tang, and A.~Swami, ``Distributed algorithms for
  learning and cognitive medium access with logarithmic regret,''
  \emph{Selected Areas in Communications, IEEE Journal on}, vol.~29, no.~4, pp.
  731--745, 2011.

\bibitem{Avner2014}
O.~Avner and S.~Mannor, ``Concurrent bandits and cognitive radio networks,'' in
  \emph{European Conference on Machine Learning}, 2014.

\bibitem{Kalathil2014}
D.~Kalathil, N.~Nayyar, and R.~Jain, ``Decentralized learning for multiplayer
  multiarmed bandits,'' \emph{IEEE Transactions on Information Theory},
  vol.~60, no.~4, pp. 2331--2345, April 2014.

\bibitem{Leith2012}
D.~Leith, P.~Clifford, V.~Badarla, and D.~Malone, ``{WLAN} channel selection
  without communication,'' \emph{Computer Networks}, 2012.

\end{thebibliography}

\appendix\label{sec:appendix}
\appendixpage

\section{Proof of \lemref{lem:monoPot}}
We would like to show that for all values of $t$ for which $t > \alpha\ln t$, the probability that the potential decreases every time it changes is at least $1-4t^{-4}$, where $\alpha = \frac{32K}{\Delta_{\min}^2}$.

Given that a change in potential occurs at time $t$, it is guaranteed to result in a potential decrease if it benefits both users. This will happen if both users' indices, that guide their decisions, are accurate w.r.t the true distribution.

Since we condition on a change in potential,
\begin{align*}
  \mP{\Phi_{\text{Dec}}}  = 1 - \mP{\Phi_{\text{Inc}}}
\end{align*}
Let us upper bound $\mP{\Phi_{\text{Inc}}}$ . For a user $n$ switching from arm $j$ to arm $i$ at time $t$, $\mu_{n,i} < \mu_{n,j}$,
\begin{align*}
  \mP{\Phi_{\text{Inc}}} = \mP{I_{n,i}\paren{t} \geq I_{n,j}\paren{t} \cap
  \mu_{n,i} < \mu_{n,j}},
\end{align*}
where $I_{n,i}\paren{t}$ is user $n$'s UCB index of arm $i$ at time $t$, defined in \eqref{eq:UCBind}.
Following the proof of Theorem 1 of \cite{Auer2002a},
\begin{align*}
  \mP{\Phi_{\text{Inc}}}
  &= \mP{\hat{\mu}_{n,i}\paren{t} + c_{t,s_{n,i}}
  \geq \hat{\mu}_{n,j}\paren{t} + c_{t,s_{n,j}} \cap \mu_{n,i} < \mu_{n,j}} \\
  &\leq 2t^{-4},
\end{align*}
provided that
\begin{align}\label{eq:condUCB}
  s_{n,i} \geq \frac{8\ln t}{\Delta_{i,j}^2\paren{n}},
\end{align}
where $s_{n,i}$ is the number of times user $n$ sampled arm $i$ up till time $t$ and $\Delta_{i,j}\paren{n} \triangleq \mu_{n,i} - \mu_{n,j}$.
If \eqref{eq:condUCB} does not hold, then the UCB index ``misleads'' user $n$, causing her to mistakenly favor arm $i$, despite its lower expected reward. Switching from arm $j$ to arm $i$ will result in an increase in potential. However, once she acquires another sample of arm $i$, its index will decrease. In the meantime, the index of arm $j$ will increase due to the passing time, and the indices will ultimately reflect the correct preference, resulting in a potential decrease.

The extreme value for \eqref{eq:condUCB}, i.e., the largest number of required samples, corresponds to the minimal value of $\Delta_{i,j}\paren{n}$.
Let us define:
\begin{align*}
  \Delta_{n} &\triangleq
  \min_{\substack{i,j\in\set{1,\ldots,K} \\ i\neq j}}\sbrk{\mu_{n,i} - \mu_{n,j}}\\
  \Delta_{\min} &\triangleq \min_{n\in\set{1,\ldots,N}}\Delta_{n}
\end{align*}
Thus, when all arms have been sampled at least
\begin{align}\label{eq:s_min}
  s_{\min} \triangleq \frac{8\ln t}{\Delta_{\min}^2}
\end{align}
times, the probability of an increase in potential is very small.

In order to allow for the coordination protocol, users do not gather informative samples in every time slot. Instead, they gather at least $K-2$ samples in each super frame, whose length is $T_{\text{SF}} = 2 + 2\paren{K-1}=2K$.

Therefore, taking into account the fact that the sampling condition in \eqref{eq:s_min} must apply for all arms, the condition on $t$ is
\begin{align}\label{eq:tCond}
  t > K\frac{T_{\text{SF}}}{K-2}s_{\min} = \frac{16K^2}{\paren{K-2}\Delta_{\min}^2}\ln t > \frac{16K}{\Delta_{\min}^2}\ln t.
\end{align}
For all times $t$ for which \eqref{eq:tCond} holds, if a change in potential occurs, it is a decrease, with probability of at least $1-2t^{-4}$.

When we apply this lemma we will use a quantity $t_{\min}$, an upper bound on the minimal $t$ for which \eqref{eq:tCond} holds.
Introducing a well-known lower bound on the logarithmic function:
\begin{align*}
  \ln x \geq \frac{x-1}{x+1} \quad \forall x >1.
\end{align*}
We use this lower bound together with \eqref{eq:tCond}:
\begin{align*}
  t_{\min} = \frac{16K}{\Delta_{\min}^2}\ln t_{\min}
  \geq \frac{16K}{\Delta_{\min}^2} \frac{t_{\min} -1}{t_{\min} +1}.
\end{align*}
Denoting $M \triangleq \frac{16K}{\Delta_{\min}^2}$, we continue:
\begin{align*}
  t_{\min} &\geq M \frac{t_{\min} -1}{t_{\min} +1} \\
  t_{\min}^2 + \paren{1-M}t_{\min} + M &\geq 0.
\end{align*}
Our conclusion is that $t_{\min} \leq \frac{M-1-\sqrt{\paren{M-1}^2 -4M}}{2}$. Since this expression is finite, we may now use it in our proof.

\section{Proof of \lemref{lem:initiator}}
The probability of a specific user becoming the initiator when there are $\ell$ interested users is
\begin{align*}
  P_s\paren{\e,\ell}&\triangleq\cP{\text{specific initiator}}{\ell\text{ interested}}\\
   &= \e\paren{1-\e}^{\paren{\ell-1}} \; \forall \ell\in\set{1,\ldots,N}.
\end{align*}
The probability is minimized when all $N$ users would like to become the initiator, yielding the bound $\e\paren{1-\e}^{N-1}$.

\section{Proof of \lemref{lem:finite_time_till_switch}}
  If the system has not reached an SMC, then according to \defref{def:SMC}, the conditions $S_1$, $S_2$ hold for at least one pair of users $n,m$.

  According to the definition of the CSM-MAB algorithm, if $S_1$ holds, then user $n$ will add the channel user $m$ is sampling to her list of preferred channels with a probability of at least $1-\delta$. Following arguments similar to those presented in the proof of \lemref{lem:monoPot}, $\delta < 2t^{-4}$. If $S_2$ holds, user $m$ will accept user $n$'s swap proposal, assuming her statistics are accurate. This, once again, happens with a probability of at least $1-\delta$. Once users $n$ and $m$ swap channels, the potential will change.

  In the worst case (i.e., largest $t'$), user $m$'s channel will be the last channel on user $n$'s list, and all users higher on the list will decline user $n$'s swap proposals. If user $n$ approaches a different user (whose channel is ranked higher than $m$'s), and that user agrees to swap, the potential will also change.

  What is left to prove is that the time it shall take user $n$ to receive the privilege of being initiator is finite.
  Once $n$ is appointed the initiator, it will take no more than $K-1$ mini-frames, i.e., $2\paren{K-1}$ time slots, until she approaches user $m$ and a swap takes place.

  There are two different cases - if $n,m$ are the the only unstable pair, then they will be the only ones interested in becoming the initiators. Furthermore, if only one of them is dissatisfied, then there will only be one user interested in initiating. In the notation of \lemref{lem:initiator}, this corresponds to $\ell = 2$ or $\ell = 1$, respectively. The probability of exactly one of them becoming the initiator is $P_{1,2}=\min\set{\e,2\e\paren{1-\e}}$.

  If there are additional unstable pairs, there will be more nominees for initiating. However, not all super frames necessarily result in a decrease in potential - if the initiator only targets channels occupied by ``satisfied'' users, all her attempts will be rejected.
  Therefore, we need to address the worst case scenario, in which all $N$ users attempt to initiate, but only one of them is in a position that will actually result in a swap. Based on \lemref{lem:initiator}, the probability of that user emerging as the single initiator is at least $\e\paren{1-\e}^{N-1}$, for a single super frame. This probability is smaller than $P_{1,2}$ for all $\e,N$, and is therefore the lower bound for the probability of a single initiator with actual capacity for a decrease in potential.

  The number of SFs in a time interval of length $t^\prime$ is $C = \floor{\frac{t^\prime}{T_{\text{SF}}}}$. The probability that a single initiator with actual capacity for a decrease in potential \emph{does not} emerge in a certain SF is less than $1-\e\paren{1-\e}^{N-1}$, and the probability that a single initiator does not emerge in the interval is less than $P_C \triangleq\paren{1-\e\paren{1-\e}^{N-1}}^C$. As $t'\to\infty$, so does $C$, and $P_C$ decays to zero.

  Binding the two aspects of this lemma together, we have that the probability of a single initiator with actual capacity for coordinating a switch emerging in an interval of length $t^\prime$ is at least $1-P_C$. The probability of a swap between users whose actions do not correspond to a stable configuration is at least $\paren{1-2t^{-4}}^2$. The combined result: if the system is not in an SMC at time $t$, then a change in the potential will occur within no more than $t^\prime$ time slots with probability of at least $\paren{1-P_C}\paren{1-2t^{-4}}^2$, where $P_C \triangleq\paren{1-\e\paren{1-\e}^{N-1}}^{\floor{\frac{t^\prime}{T_{\text{SF}}}}}$.

  Let us re-write the result for the sake of clarity: if the system is not in an SMC at time $t$, then a change in the potential will occur within no more than $t^\prime\paren{\delta_1}$ time slots with probability of at least $1-\delta_1$.
  Developing the previous expression for the probability of a change in potential:
  \begin{align*}
    \paren{1-P_C}\paren{1-2t^{-4}}^2
    &= \paren{1-P_C}\paren{1-4t^{-4}+4t^{-8}}\\
    &\geq \paren{1-P_C}\paren{1-4t^{-4}} \\
    &= 1-P_C - 4t^{-4} + 4P_C t^{-4} \\
    & \geq 1 - P_C - 4t_{\min}^{-4}.
  \end{align*}
  From now on, we denote $\delta_1 = P_C + 4t_{\min}^{-4}$.
  Using this, we can derive an expression for $t^\prime\paren{\delta_1}$:
  \begin{align*}
    P_C &= \delta_1 - 4t_{\min}^{-4} \\
    \paren{1-\e\paren{1-\e}^{N-1}}^{\floor{\frac{t^\prime}{T_{\text{SF}}}}}
    &= \delta_1 - 4t_{\min}^{-4} \\
    \frac{t^\prime}{T_{\text{SF}}}\ln \paren{1-\e\paren{1-\e}^{N-1}}
    &= \ln\paren{\delta_1 - 4t_{\min}^{-4}} \\
    t^\prime & = T_{\text{SF}}\frac{\ln\paren{\delta_1 - 4t_{\min}^{-4}}}{\ln \paren{1-\e\paren{1-\e}^{N-1}}}.
  \end{align*}

\end{document}